\definecolor{red}{rgb}{1,0,0}
\newtheorem{theorem}{Theorem}
\def\bea{\begin{eqnarray}} 
\def\eea{\end{eqnarray}}
\def\be{\begin{equation}} 
\def\ee{\end{equation}} 
\def\ba{\begin{array}}
\def\ea{\end{array}}
\title{SuperCoder: Program Learning Under Noisy Conditions From Superposition of States}
\author{\textbf{Ali Davody}$^*$ \textsuperscript{\!\!1} \quad \textbf{Mahmoud Safari}\thanks{Equal contribution.}   \textsuperscript{\hspace{2.5pt}2} \quad \textbf{R\u azvan V.~Florian} \textsuperscript{2} \\
	\textsuperscript{1} 
	Spoken Language Systems Group, Saarland Informatics Campus, Saarland University, Germany\\
	\textsuperscript{2} Romanian Institute of Science and Technology, str.~Virgil Fulicea  nr. 3, 400022 Cluj-Napoca, Romania \\ \vspace{1mm}
	\tt\small adavody@lsv.uni-saarland.de \quad  \tt\small safari@rist.ro 
	\quad \tt\small florian@rist.ro
}
\begin{document}

\maketitle

\begin{abstract}
	We propose a new method of program learning in a Domain Specific Language (DSL) which is based on gradient descent with no direct search. The first component of our method is a probabilistic representation of the DSL variables. At each timestep in the program sequence, different DSL functions are applied on the DSL variables with a certain probability, leading to different possible outcomes. Rather than handling all these outputs separately, whose number grows exponentially with each timestep, we collect them into a superposition of variables which captures the information in a single, but fuzzy, state. This state is to be contrasted at the final timestep with the ground-truth output, through a loss function. The second component of our method is an attention-based recurrent neural network, which provides an appropriate initialization point for the gradient descent that optimizes the probabilistic representation. The method we have developed surpasses the state-of-the-art for synthesising long programs and is able to learn programs under noise.
\end{abstract}

\section{Introduction}
\label{s:intro}

Despite years of research, the ambitious goal of automatically generating a program consistent with a given specification is still one of the active fields in machine learning and artificial intelligence. In inductive program synthesis, a set of input/output (I/O) examples is used as a specification from which the program is to be inferred. Existing approaches include many search algorithms that have been developed to synthesize programs by using a predefined Domain Specific Language (DSL). Recent progress in deep learning has triggered a boost in this field, leading to methods based on neural networks.  Approaches for neural program learning include either ``program synthesis'', where a symbolic representation of the program is inferred \cite{zohar2018automatic, balog2016deepcoder, parisotto2016neurosymbolic, alex2016terpret, bonjak2016programming}, or ``program induction'' where a neural network predicts the output corresponding to a given input \cite{graves2014neural, graves2016hybrid, kurach2015neural, kaiser2015neural, NIPS2015_5857, reed2015neural, neelakantan2015neural}.
For a clear overview and comparisons between the two approaches see \cite{devlin2017robustfill}. Recently, search techniques have been combined with neural methods where the output of the neural network guides the search algorithm. The first example of this is \cite{balog2016deepcoder}, which shows that integrating neural networks with search-based methods can significantly improve the performance compared to search-based techniques or neural-based approaches in isolation. In a more recent work \cite{zohar2018automatic} along the same lines, the authors employ a step-wise approach and a beam search, and further train a second network to predict the variables to be discarded from the tree search. This improves the performance and extends the effectiveness of the approach to longer programs.

Here we develop a new way of combining a neural network with some sort of ``search'' method, where the search does not involve a brute-force verification of the predicted programs but is instead a gradient-descent-based optimization process. There are several specific features of our approach. First, we employ an alternative (compact) representation for the variables on which DSL rules act, which can be interpreted as the probability distribution over all possible variables. This representation can capture the original deterministic cases (sharp distributions) but is also capable of representing the superposition of several variables with their corresponding probabilities. These superpositions inspire the name of our approach, SuperCoder. Second, the DSL rules must be realized on this probabilistic state space as a differentiable map, such that they transform one probabilistic state to another. Starting from a single sharp input, the different DSL rules may act on it with different probabilities, and this gives rise to several different states, each of which can occur with a certain probability. However, as a result of this more convenient probabilistic representation, these different states can be summed up (superposed) to form a single, but fuzzy, state. This continues until we come to the final output which is again a single fuzzy variable. Finally, this last state is to be contrasted with the ground-truth output and optimized with respect to a suitable cost function. The optimization parameters are the probabilities with which the DSL rules are applied on the state at different steps of the program. The trained probabilities then point to a particular program which ``most likely'' transforms the input to the output. Unlike brute-force search methods, including the state-of-the-art method \cite{zohar2018automatic} where the number of variables still grows considerably with program length, the structure of this (fuzzy) state does not change with the number of program steps. The only thing that changes is the values of its entries. This is to some extent responsible for the better performance of our method on long programs compared to existing approaches. We have developed this method based on inspiration from quantum superposition in physics.

Ref. \cite{alex2016terpret} also uses a differentiable fuzzy system of distributions to represent variables for a program synthesis task, however while their primitives are limited to assignments and conditionals based on comparisons with constants, our primitives extend to a range of arithmetic operations operating on lists of integers and to other list operations such as \texttt{head} and \texttt{tail}. Also, \cite{kurach2015neural} uses variables that represent probability distributions, but the modules that perform operations on these variables do not operate on lists of variables like in our case but on pairs of inputs, and these inputs are not directly the variables but involve outputs of a neural network controller. The architectures of both \cite{alex2016terpret} and  \cite{kurach2015neural} are significantly different than ours.

Another specific of our method is combining the output of a neural program synthesizer with the previously mentioned gradient-descent-based optimization. The output of the neural network is used as a starting point for the additional optimization. We therefore use a novel pipeline of two modules that are using distinct approaches for generating and then refining a probability distribution over synthesised programs\footnote{Our implementation of this method is available at https://github.com/rist-ro/SuperCoder}.

Our main contributions are summarized as follows: 
\begin{itemize}[leftmargin=*]
	\item[$\bullet$] We have employed insights from quantum physics to develop a novel gradient-based search method for program learning by introducing a new continuous relaxation of the search space which combines a probabilistic interpretation of DSL variables and the idea of superposition of states. The gradient-based nature of the method makes it capable of handling noise in the data, on tasks where the goal is to predict the output.
	\item[$\bullet$] As an essential ingredient in our differentiable search method, we introduce a representation of the DSL variables that is highly compressed, and yet captures enough information to allow identifying a program consistent with a given input-output. This is demonstrated through a theorem we prove in Section 4.
	\item[$\bullet$] We integrate our differentiable search method with a neural network, which provides a suitable initial point for this gradient-based optimization. This significantly improves the performance. 
	\item[$\bullet$] We have tested our SuperCoder method on two different program learning tasks, including one with noisy data. For long programs, i.e. those with a sequence of at least 11 operations, our method is shown to outperform the state-of-the-art.
\end{itemize}

\section{Setup of the problem}

In order to demonstrate our approach and to compare it with a state-of-the-art method \cite{zohar2018automatic}, we adopt the same problem setup as in \cite{balog2016deepcoder, zohar2018automatic} where the I/O of the program can either be an integer or a list of integers. The integers (whether inside a list or as separate variables) range from a minimum value $\ell_\textrm{min}$ to a maximum value $\ell_\textrm{max}$. The lists have a maximum length of $L$. The DSL functions include simple operations on these variables,  such as applying an arithmetic operation on all list elements, taking the first or last element of the list, etc. Operations on the DSL variables may also result in a null variable. A simple example is increasing list elements by one when the maximum possible integer is already within the list, so that the result falls outside the allowed range. We refer to the three cases ``integer'', ``list'' and ``null'' as variable {\it type}. 

The goal is to infer the program (i.e. sequence of operations) given I/O pairs (synthesis). The synthesized programs can then be used to predict the outputs corresponding to new inputs.

The central idea in our approach is to avoid brute-force search algorithms by adopting a probabilistic point of view. A key step in the construction is to represent our variables (integers, list of integers or the null variable) with a 3-dimensional tensor $ \psi_{i,j,k}$ with 0 and 1 entries, having a shape $(L+2,L,d)$ where $L$ is the maximum length of lists and $ d = \ell_\textrm{max} - \ell_\textrm{min} +1 $ is the number of possible values of integers. Roughly, the first dimension determines the type of the variable: null, integer, or list of length $ 1 $ through $ L $. The second dimension specifies the position in the list, and the third dimension refers to the value of the integer at that position. When there is no confusion, we will drop the commas between indices ($\psi_{ijk} \equiv \psi_{i,j,k}$).

Specifically, the null state corresponds to a tensor whose only nonzero element is $ \psi_{000} =1 $. An integer variable $ k $ corresponds to $ \psi_{1, 0, k-\ell_\textrm{min}} =1 $ with zero entries elsewhere, and in this case the third index in the tensor is therefore simply a one-hot representation of the integer. Finally the tensor representation for a list of length $ 1\leq i \leq L$ with integer elements $ k_j $, $ j = 0, 1, \ldots, i-1 $, has nonzero elements for $ \psi_{i+1, j, k_j - \ell_\textrm{min}} =1 $; in this case, again, the third index is a one-hot representation of $ k_j $.  We refer to these tensors as sharp states, as they uniquely correspond to a DSL variable. From the construction of the tensors, it is clear that the only elements that can take nonzero values are   $ \psi_{000}  $,  $ \psi_{10k} $ and $ \psi_{ijk}  $ for $ i \geq 2 $ and $0\leq j \leq i-2$. The remaining entries can therefore be regarded as a zero-padded region. 
 
Consider now a weighted sum $ \psi_{ijk} = \sum_{I=1}^r w_I\, \psi^{(I)}_{ijk}  $ of some sharp states $ \psi^{(I)}_{ijk} $, $ I = 1, \ldots, r $, where $ \sum_{I=1}^r w_I =1 $, so that $ w_I $ can be interpreted as the probability of the occurrence of state $ \psi^{(I)}_{ijk} $. In this case the tensor $ \psi_{ijk} $ is no longer a one-hot vector along its third dimension but a probability vector for different integers. The sum $ \sum_k\psi_{ijk}$, however, is not necessarily equal to 1 but gives the probability of having the variable type $ i $. This is regardless of the value of $j$ as far as it does not belong to the zero-padded region.
A further sum over the first index, however, must give
\be \label{psi=1}
\psi_{000} + \sum_k \psi_{10k} + \sum_{i\geq 2}\sum_k \psi_{ij_ik} =1 
\ee
for any $ 0\leq j_i \leq i-2 $, which simply means that any state is either null, an integer or a list of integers, whose probabilities are given respectively by the first, second and third terms on the left-hand side.

\section{Applying DSL functions}
\label{s:dsl}

\noindent
Having described the probabilistic $ \psi $ representation of the variables, we now turn to the transformation of these states under the DSL functions. We denote the transformed version of the state $ \psi_{ijk} $ by $ \tilde\psi_{ijk} $. Following \cite{balog2016deepcoder, zohar2018automatic}, the result of any function acting on an integer or the null state is defined to be null. We will therefore only need to specify their action on lists. Let's start our discussion of the DSL transformations with the simple function \texttt{head}. By definition, when acting on a list, the effect of the \texttt{head} function is an integer equal to the first  (leftmost) element of that list. Therefore, for the \texttt{head} operation, the only non zero elements of $\tilde \psi_{ijk}$ are $ \tilde \psi_{000} $ and $ \tilde \psi_{10k} $. The latter is related to the input state $ \psi $ as 
\be \label{head_10k}
\tilde \psi_{10k} = \sum_{i\geq 2} \psi_{i0k}
\ee
which means that the probability that the \texttt{head} operation gives an integer $ k+\ell_\textrm{min} $ is the probability of being in a list state with first element $ k+\ell_\textrm{min} $. This is regardless of the list length and hence the sum over $ i\geq 2 $. The other nonzero entry of the transformed state is given as $ \tilde \psi_{000} = \psi_{000} + \sum_k \psi_{10k} $. This is because the probability that the \texttt{head} operation gives the null state is equal to the probability of being either in a null state (hence the first term) or being in an integer state, independent of the integer value (hence the second term). In practice we will never need the $ \tilde \psi_{000} $ elements because the target outputs of the programs will never be null. Therefore from now on we can simply discard $ \tilde \psi_{000} $ and focus on the other nonzero elements of the transformed state. The transformation of the $ \psi_{ijk} $ state under the \texttt{tail} function, which picks instead the last entry of a list, is quite similar except that the r.h.s. of eq. (\ref{head_10k}) will be replaced by $ \sum_{i\geq 2} \psi_{i,i-2,k} $. 

Let us now move to the rest of the DSL functions. The remaining DSL functions we consider in this work are addition and subtraction by 1, multiplication and integer division by 2, 3 and 4, multiplication by $ -1 $, and raising to the power of 2. By definition, these operations apply to list elements only, and transform integers (that are not list elements) and the null states to null. For the integer divisions, the integer with the largest absolute value smaller than the absolute value of the normal division is taken. For instance $ 3 $ divided by $2$ will be $ 1 $ and $ -3 $ divided by $ 2 $ will be $ -1 $.

In principle the transformations of the states under these functions may involve nonlinearities because of the normalizations required to guarantee eq. \eqref{psi=1}. However, as we will see later, the precise form of the transformations that ensure their interpretation as probabilities is actually not necessary. In fact all we need is that the transformations are differentiable, are valid on sharp states, and that the transformed states satisfy $ \sum_k \tilde\psi_{10k} + \sum_{i\geq 2}\sum_k \tilde \psi_{ijk} \leq 1  $ (which is always valid if the state satisfies \eqref{psi=1}, given that we have discarded $ \tilde\psi_{000} $). This results from a theorem that we prove in the next section. These considerations allow for much simpler transformation rules. In particular under the above mentioned functions (apart from \texttt{head} and \texttt{tail}) the transformations can be written in a unified way as 
\be \label{transformations}
\tilde \psi_{ij\sigma(k)}= \psi_{ijk}.
\ee
where $ \sigma $ is a function of indices and depends on the DSL function. The range of $ k $ is such that $ 0 \leq \sigma(k) \leq d-1 $. The precise form of $ \sigma $ for the remaining DSL functions is listed below:  
{\setlength\arraycolsep{2pt}
\be  \label{functions}
\ba{llll}
\sigma(k)  &=& k + 1 &  \quad\texttt{plus1} \\
\sigma(k)  &=& k -1 &  \quad\texttt{minus1} \\
\sigma(k)  &=& 2\;k + \ell_\textrm{min} &  \quad\texttt{times2} \\
\sigma(k)  &=& 3\;k + 2 \; \ell_\textrm{min} &  \quad\texttt{times3} \\
\sigma(k)  &=& 4\;k + 3 \; \ell_\textrm{min} &  \quad\texttt{times4}
\ea \qquad \qquad
\ba{llll}
\sigma(k)  &=& -k -2 \; \ell_\textrm{min} &  \quad\texttt{timesm1} \\
\sigma(k)  &=& (k + \ell_\textrm{min})^2 - \ell_\textrm{min} &  \quad\texttt{power2} \\
\sigma(k)  &=& (k + \ell_\textrm{min}) / 2 - \ell_\textrm{min}   &  \quad\texttt{div2} \\
\sigma(k)  &=& (k + \ell_\textrm{min}) / 3 - \ell_\textrm{min} &  \quad\texttt{div3} \\
\sigma(k)  &=& (k + \ell_\textrm{min}) / 4 - \ell_\textrm{min}  &  \quad\texttt{div4}
\ea
\ee}
where the division in the last three equations is the previously defined integer division.

A property that the above functions have in common is that they result in a null output when they act on a null or integer input, and when they act on lists they either preserve the list length or give null.  By appropriately choosing the $ \sigma $ function, the transformation \eqref{transformations} applies to all other functions with the above-mentioned property. However, including other types of functions not enjoying this property requires working out the transformation $ \psi \rightarrow \tilde \psi $ separately. In this work, for simplicity of computation, we stick to the set of functions in eq. \eqref{functions} plus the \texttt{head} and \texttt{tail} operators.
Nevertheless, this choice of DSL is complex enough to challenge the existing program learning approaches, especially on long programs.

\section{Optimization process} 

The goal of the optimization process is to predict, at each timestep of the program, the probabilities of different functions. This prediction is based on the set of I/O examples in a sample, which are all consistent with a (not necessarily unique) program. For this purpose, the three dimensional tensors $ \psi $ for different input or output examples in a sample are collected together into a four dimensional tensor $ \Psi $ of shape $(m,L+2,L,d)$ whose first index $ m $ enumerates the  examples within a sample.

Let us now denote the DSL functions by $ f_s $, for instance these could be $ f_1= $ \texttt{head}, $ f_2= $ \texttt{tail}, etc. defined in the previous section. Then, the probability of the function $ f_s $ acting at timestep $ t $ of the program is denoted by $ \pi_{ts} $. Therefore the probability of all possible operations can be collected into a two dimensional tensor $ \pi_{ts} $ of shape $ (T, n) $, where $ T $ is the program length and $ n $ is the number of DSL functions. These probabilities clearly satisfy $\sum_s \pi_{ts} = 1$.

Let us denote the output state at timestep $t$ by $ \Psi_{(t)} $ which is an instance of the four-dimensional tensor $ \Psi $. This is related to the input from the previous timestep through the following weighted sum, or {\it superposition of states}
\be 
\Psi_{(t)} = \sum_s\pi_{ts}\,f_s(\Psi_{(t-1)} ).
\ee
Starting from the initial state $ \Psi_{(0)}  $ and by successive application of this operator, one arrives at the final state $  \Psi_{(T)}  $
\be 
\Psi _{in} \equiv \Psi_{(0)}  \rightarrow \Psi_{(1)}  \rightarrow \cdots \rightarrow \Psi_{(T)}  \equiv \Psi _{out},
\ee
therefore, for a fixed set of DSL functions, the tensor $ \pi_{ts} $ induces a map $ \pi_{ts}: \Psi _{in} \rightarrow \Psi _{out} $ between the input and output states. The output $ \Psi _{out} $ has then to be contrasted with the ground-truth output $ \hat \Psi $. For this purpose we need to come up with a suitable loss function.

Inspired by the cross-entropy loss the natural choice for this function would be
\be \label{loss}
L (\pi) =  -\frac{1}{N} \!\! \sum_{i\neq 0,m,j,k}\! \hat{\Psi}_{mijk} \,\log \, \Psi_{mijk} \textrm{,}  \quad \textrm{with} \quad
N = \!\!\!\sum_{i\neq 0,m,j,k}\! \hat{\Psi}_{mijk} \textrm{,}
\ee
where the argument of the $ \log $, $ \Psi = \Psi _{out} $ is the final output state of the program whose entries are the probabilities; $ \hat{\Psi} $ is the ground-truth tensor; and $ N $ is the total number of tokens in the ground-truth output variables. For each example $ m $, $ \hat{\Psi}_{mijk} $ is nonzero only for a single value of the index $ i $ but can have several nonzero entries in the remaining two dimensions $ j,k $ depending on how many nonzero entries there are in the given list. For a fixed program length the loss function, eq.(\ref{loss}), is then optimized with respect to the probabilities in $ \pi_{ts} $ to find $ \pi^*_{ts} = \mathrm{argmin}_{\pi} \; L(\pi) $. In practice, however, one may get stuck in local minima. For each timestep $ t $ of the program the $\mathrm{argmax}$ of the tensor $ \pi_{ts} $ along the second dimension $ s $ determines the predicted operation at that particular timestep. These operations are then put together to predict the final program.

In the previous section we anticipated that the transformations of the $ \psi $-representations under DSL functions only need to satisfy some weak conditions. Here we prove this explicitly by showing that under such conditions the solution $ \pi^*_{ts} $ to $ L(\pi)=0 $, when $\mathrm{argmax}$-ed over the second dimension, corresponds to the sequence of operations, i.e. the program, that maps the input $ \Psi_{in} $ to the ground-truth output $ \hat\Psi $. This is summarized in the following theorem:
\begin{theorem}
Under the following two conditions on the DSL transformations in the $ \psi $-representation: 
\begin{itemize}
	\item[(a)] $ \tilde \psi_{ijk} $ is a non-null sharp state if and only if $ \psi_{ijk} $ is a non-null sharp state;
	\item[(b)] If $ \sum_k \psi_{10k} + \sum_{i\geq 2}\sum_k \psi_{ij_ik} \leq 1 $ then $ \sum_k \tilde \psi_{10k} + \sum_{i\geq 2}\sum_k \tilde \psi_{ij_ik} \leq 1 $,  ($ 0\leq j_i \leq i-2 $);
\end{itemize}
a solution $ \pi^*_{ij} $ to equation $ L(\pi)=0 $ satisfies:
\be \label{f}
f_{\mathrm{argmax}_s \pi^*_{Ts}}\circ \cdots \circ f_{\mathrm{argmax}_s \pi^*_{2s}}\circ f_{\mathrm{argmax}_s \pi^*_{1s}} (\Psi_{in}) = \hat\Psi.
\ee
\end{theorem}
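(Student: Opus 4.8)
The plan is to prove the statement in three stages: first show that a mass-bounded probabilistic output together with zero loss forces the final state to coincide exactly with the sharp ground truth; then observe that a sharp state can arise from a superposition only trivially, which pins down the action of the $\mathrm{argmax}$ operation at a single step; and finally propagate this backwards through all timesteps using condition (a). Throughout, fix a solution $\pi=\pi^\ast$ to $L(\pi)=0$ and recall that all entries of the $\psi$-tensors are non-negative (probabilities).

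First I would establish, by forward induction on $t$, that every intermediate state $\Psi_{(t)}$ satisfies the mass bound $\sum_k\Psi_{m,1,0,k}+\sum_{i\geq 2}\sum_k\Psi_{m,i,j_i,k}\leq 1$ for each example $m$ and each admissible choice of positions $j_i$. The base case holds because $\Psi_{in}=\Psi_{(0)}$ is a sharp state and hence obeys \eqref{psi=1}; the inductive step follows from condition (b) applied to each $f_s$ together with the fact that $\Psi_{(t)}=\sum_s\pi_{ts}f_s(\Psi_{(t-1)})$ is a convex combination ($\sum_s\pi_{ts}=1$), so the bound is preserved. In particular any single entry is bounded by its partial mass and hence lies in $[0,1]$. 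Using this, and assuming $L(\pi)=0$, each summand $-\hat\Psi_{mijk}\log\Psi_{mijk}$ in \eqref{loss} is non-negative (the logarithm of a number in $[0,1]$ is non-positive), so every summand vanishes and $\Psi_{mijk}=1$ at each position where the sharp ground truth has $\hat\Psi_{mijk}=1$. I would then upgrade this to full equality $\Psi_{out}=\hat\Psi$: fixing an occupied position of the ground-truth type makes its partial mass already equal to $1$, and the mass bound with non-negativity leaves no mass for any other type or any other integer value, forcing all remaining entries to be zero.

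The heart of the argument is the passage from ``the superposition equals $\hat\Psi$'' to ``the $\mathrm{argmax}$ program equals $\hat\Psi$''. Here I would use that a non-null sharp state is an extreme point of the convex set of mass-bounded non-negative states. Writing $\Psi_{(t)}=\sum_s\pi_{ts}\chi_s$ with $\chi_s=f_s(\Psi_{(t-1)})$, each $\chi_s$ non-negative with mass $\leq 1$: at a position where $\Psi_{(t)}$ equals $1$, each $\chi_s$ with $\pi_{ts}>0$ is at most its own partial mass $\leq 1$, so a convex combination equal to $1$ forces every such $\chi_s$ to equal $1$ there; at a position where $\Psi_{(t)}$ equals $0$, non-negativity forces every $\chi_s$ to vanish there. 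Hence $f_s(\Psi_{(t-1)})=\Psi_{(t)}$ for every $s$ with $\pi_{ts}>0$, and in particular for $s_t^\ast=\mathrm{argmax}_s\,\pi_{ts}$, whose weight is positive.

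Finally I would run a backward induction from $t=T$ down to $t=1$. The base case is $\Psi_{(T)}=\hat\Psi$, which is non-null sharp. Assuming $\Psi_{(t)}$ is non-null sharp, the extreme-point step gives $f_{s_t^\ast}(\Psi_{(t-1)})=\Psi_{(t)}$, and since $\Psi_{(t)}$ is non-null sharp, condition (a) forces $\Psi_{(t-1)}$ to be non-null sharp as well, closing the induction. Chaining the identities $f_{s_t^\ast}(\Psi_{(t-1)})=\Psi_{(t)}$ for $t=1,\dots,T$ then yields exactly \eqref{f}; the whole argument is entrywise, so it runs simultaneously over all examples $m$ even though the $s_t^\ast$ are shared. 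I expect the main obstacle to be precisely this superposition-to-$\mathrm{argmax}$ gap: the zero-loss condition only controls the fuzzy output, and it takes the extremality of sharp states (via the mass bound from condition (b)) together with the backward propagation of sharpness (via condition (a)) to conclude that a single deterministic path of operations reproduces the ground truth.
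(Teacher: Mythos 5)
Your proposal is correct and follows essentially the same route as the paper's own proof: zero loss together with the mass bound from condition (b) forces $\Psi_{(T)}=\hat\Psi$ to be sharp, the convex-combination (extremality) argument forces every $f_s(\Psi_{(T-1)})$ with positive weight --- in particular the $\mathrm{argmax}$ one --- to equal it, and condition (a) propagates sharpness backwards through the timesteps. The only difference is one of rigor, not of approach: you make explicit the forward induction establishing the mass bound on all intermediate states and the entrywise extreme-point reasoning, both of which the paper invokes implicitly.
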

\begin{proof}
The equation $ L(\pi^*)=0 $ implies that the predicted output $ \Psi_{(T)}  = \Psi_{out}$ will have 1's at the same position as those of $ \hat \Psi $. Then condition (b) guarantees the vanishing of the remaining entries of $ \Psi_{(T)} $. Therefore $ \Psi_{(T)} $ will be a sharp state equal to $ \hat \Psi $. Given that $ \Psi_{(T)}  = \sum_s \pi_{Ts} f_s (\Psi_{(T-1)}) $, where $\sum_s \pi_{Ts} = 1$, and that the entries in $ f_s (\Psi_{(T-1)}) $ are all less than 1 (which follows from condition (b)), we can conclude that all $ f_s (\Psi_{(T-1)}) $ with nonzero coefficients $ \pi_{Ts} $ have the same structure as $ \hat \Psi = \Psi_{(T)} $, in particular $ f_{\mathrm{argmax}_s\pi^*_{Ts}}(\Psi_{(T-1)}) = \Psi_{(T)}  $. Now, from (a) $ \Psi_{(T-1)} $ will also be a sharp state, and therefore the argument can be repeated. This can be continued to conclude that $ f_{\mathrm{argmax}_s\pi^*_{1s}}(\Psi_{(0)} = \Psi_{in}) = \Psi_{(1)} $, so that $ f_{\mathrm{argmax}_s\pi^*_{ts}}(\Psi_{(t-1)}) = \Psi_{(t)}  $ for $ t=1, \cdots, T $, which implies eq. \eqref{f}.
\end{proof}
It is straightforward to show that our functions defined in the previous section satisfy these two requirements. This is explicitly demonstrated in the appendix.  

The tensor $ \psi $ not only gives a probabilistic description of possible states but it is highly compressed, in the sense that its number of elements $d \; L \; (L+2)$ is polynomial in $ d, L $, and independent of the program length. This should be compared with the one-hot vector representation of the DSL variables. The length of such a vector will be roughly the number of possible lists which is exponentially large, of the order $ d^L $. 
The trade-off for this compact choice of representation is that information is lost in superpositions of states, that is, if we sum up two states, the individual summands cannot necessarily be completely recovered, contrary to the one-hot representation where each summand contributes to a different vector component and thus no information is lost. Nevertheless, despite this loss of information, our choice of representation still captures enough information to allow identifying the program that maps input to output, providing the ground for Theorem 1.

\section{Neural network}
\label{s:net} 

In the optimization process the variables $ \pi_{ts} $ may be chosen to be initiated randomly. However, a more educated choice leads to a better performance. This is where a neural network can prove highly effective. 
Specifically, one may train a neural network to predict the program consistent with a given set of I/O examples, that is, the input of the network is a set of I/O pairs consistent with some program, and the output is a two dimensional tensor of probabilities of the same shape as $ \pi_{ts} $. This tensor is then fed as an initialization point into the optimization problem described in the previous section. This way, the neural network integrates with and guides our optimization method.

The natural choice of a neural network for this purpose would be a sequence-to-sequence model. We have implemented in PyTorch \cite{NEURIPS2019_9015} an attention-based recurrent neural network (RNN) model with a bidirectional encoder, a unidirectional decoder, and single-layer gated recurrent units (GRU) \cite{cho2014learning} both in the encoder and decoder. The encoder and decoder are connected only through the attention network. Further details about the network, including the architecture and its performance are provided in the appendix.

\section{Experiments and results}

In this section we present the results of two different experiments regarding program learning, and compare our SuperCoder approach with a state-of-the-art method, PCCoder \cite{zohar2018automatic}.

We have prepared a dataset of 86604 sample programs which is split with a 9 to 1 ratio into training and validation datasets. 
Each sample includes a set of I/O examples that are all consistent with a specific program.
We set the maximum length of the programs with which the samples are generated to 25. The set of DSL functions are those defined in Section \ref{s:dsl}, and the integers and list elements range from $\ell_\textrm{min}=-100$ to $\ell_\textrm{max}=100$. The lists have a maximum length of $L = 10$. 

We further have a set of 18 separate test datasets, each including 500 samples and generated with programs having lengths between 8 and 25. This allows us to test our model on problems of different program lengths independently. In the test datasets, the number of I/O examples in each sample depends on the task, as will be further detailed below. The training and test datasets are generated using the open source implementation of PCCoder\footnote{https://github.com/amitz25/PCCoder.}.

It is worth mentioning that, given our set of ten operations defined in eq.\eqref{functions} (putting aside the \texttt{head} and \texttt{tail} operators for simplicity), for the shortest program length we consider in this work (i.e. 8) the number of possible programs 
would roughly be $10^8$, which grows exponentially with program length up to $10^{25}$ for the longest programs. The search space is therefore large enough to pose a challenge for traditional symbolic search algorithms. 

Specifically, for the problem considered in this work, it takes around $10^{-3}$ seconds to generate a random program and verify its compatibility with a given sample. This means that in one second a thousand random programs can be tested, which is too small compared to the size of the search space, even for the shortest programs considered here, and taking into account possible degeneracies in program space. Indeed, we have explicitly checked, for the test dataset generated with length-8 programs, that allocating 5 seconds for each sample (similar to the experiments of the following section), random search is able to solve only one out of 500 samples. 

In contrast, SuperCoder's search method, which is in fact an optimization process, is almost agnostic to the program length. As demonstrated by the experiments of the following sections, this low sensitivity to program length can be seen 
even in comparison to PCCoder as a state-of-the-art search method that is guided by a network and is especially designed to synthesize long programs.

\subsection{Synthesizing programs that reproduce exactly the I/O mappings}

For this setup, in the test datasets we will have 5 examples in each sample. We consider that the synthesized programs are correct if they are able to reproduce exactly the I/O mappings for each of the 5 examples. The accuracy we measure is the percentage of synthesized programs that are correct. This is done separately for test datasets generated by programs of different lengths.

While we know the length of the program that generated a sample, it is possible that shorter programs may generate the sample as well. Computing the length of the shortest program that performs a given I/O mapping (conditional Kolmogorov complexity) is known to be uncomputable. 
Nevertheless, this is not our focus in the present work, rather, the goal here is to see if program synthesis methods are able to find, within a given timeout, a program of prespecified length 
consistent with I/O examples in the sample. 
One such program is the one that generated the sample in the first place.

For the optimization process, SuperCoder takes as an input the given length of the program that generated the sample. To make a fair comparison, we therefore restrict the search algorithm of PCCoder as well 
so that it dedicates all the timeout to searching for programs of the given length.

The synthesis accuracies on samples generated by programs with lengths between 10 and 17 are reported in Table \ref{synthesis_comparison}, where the timeout for search/optimization is set to 5 seconds. The table shows that SuperCoder performs better than PCCoder for program lengths longer that 10.

\begin{table}[h]
\begin{center}
\begin{tabular}{rccccccccccc}
\Xhline{1pt} 
	\it{Program length} & 10 & 11 & 12 & 13 & 14 & 15 & 16 & 17    \\ \hline
	PCCoder & {\bf 72.4\%} & 49.2\% & 31.4\% & 42.2\% & 34.2\% & 52.8\% & 29.4 \% & 40.6\%   \\ 
	SuperCoder & 53.4\% & {\bf 63.2\%} & {\bf 54.6\%} & {\bf 44\%} & {\bf 36.8\%} & {\bf 72.2\%} & {\bf 60\%} & {\bf 47.8\%} \\ \Xhline{1pt} 
\end{tabular}
\end{center}
\caption{SuperCoder vs PCCoder. Comparison of synthesis accuracies as a function of different program lengths. SuperCoder achieves state-of-the-art records on long programs.}
\label{synthesis_comparison}
\end{table}

\subsection{Assessing synthesized programs for approximate prediction of outputs}

For this task our test datasets include 10 examples in each sample, which are split 
into 5 observed and 5 assessment examples. The program is then inferred from the 5 observed examples and tested on the 5 assessment examples.

In this case the performance of the synthesized program is measured by computing a score that measures the similarity between the program outputs and the target outputs for the assessment examples. This is a ratio of two integers calculated as follows: For each assessment example the model gets a credit if the type of the output, i.e. whether it is a list or an integer, is predicted correctly, and then an extra credit for each token of the output that it correctly predicts at each position in the output. The results are summed for all assessment examples in the sample and form the numerator. The denominator instead receives a contribution from each example, which is equal to 1 plus the maximum of the lengths of the prediction and the ground-truth output. The 1 here corresponds to the {\it type} token. The length for the integer type variables is 1.

We introduce in the dataset various amounts of noise in the outputs (and not in the inputs). This is done by replacing each output token with a random integer within the allowed range, with a certain probability, i.e. the noise. For the training/validation datasets, the noise is introduced in all the outputs, while for the test datasets this is done only for the outputs of the observed examples. Specifically, we take the training/validation datasets as well as the test datasets of different program lengths, and introduce 10\%, 20\% and 30\% noise into them. We will therefore have four training datasets, one without noise and 3 with different amounts of noise. Under each of these four datasets we train our attention-based RNN over 50 epochs and pick the model with lowest validation error, rather than the one of the last epoch. The training is done using Adam optimizer \cite{kingma2014adam} with a learning rate of 0.0005 and batch size 32. The resulting four trained models are then further improved with the optimization process and tested separately on the test datasets with the same amount of noise, for the 18 different program lengths. For the test, the model predicts the program from observing the first 5 examples in a sample (i.e. observed examples), and applies the predicted program to the inputs of the assessment examples, which are finally contrasted with the outputs of the assessment examples, in order to compute the accuracy. For each sample we run the optimization process for 5 seconds on a GeForce GTX 1080 Ti GPU, with a learning rate of 0.2. 

 We compare the performance of our method with PCCoder. The network of PCCoder is trained over 40 epochs under the same four datasets, and the model with the lowest validation error is chosen. The validation errors usually become stable within the first 15 epochs of training. The trained models with different amounts of noise are then combined with the search process and tested separately on the test datasets with the same amount of noise, for the 18 different program lengths. 
\begin{figure}[h]
\centering
\includegraphics[width=0.45\textwidth]{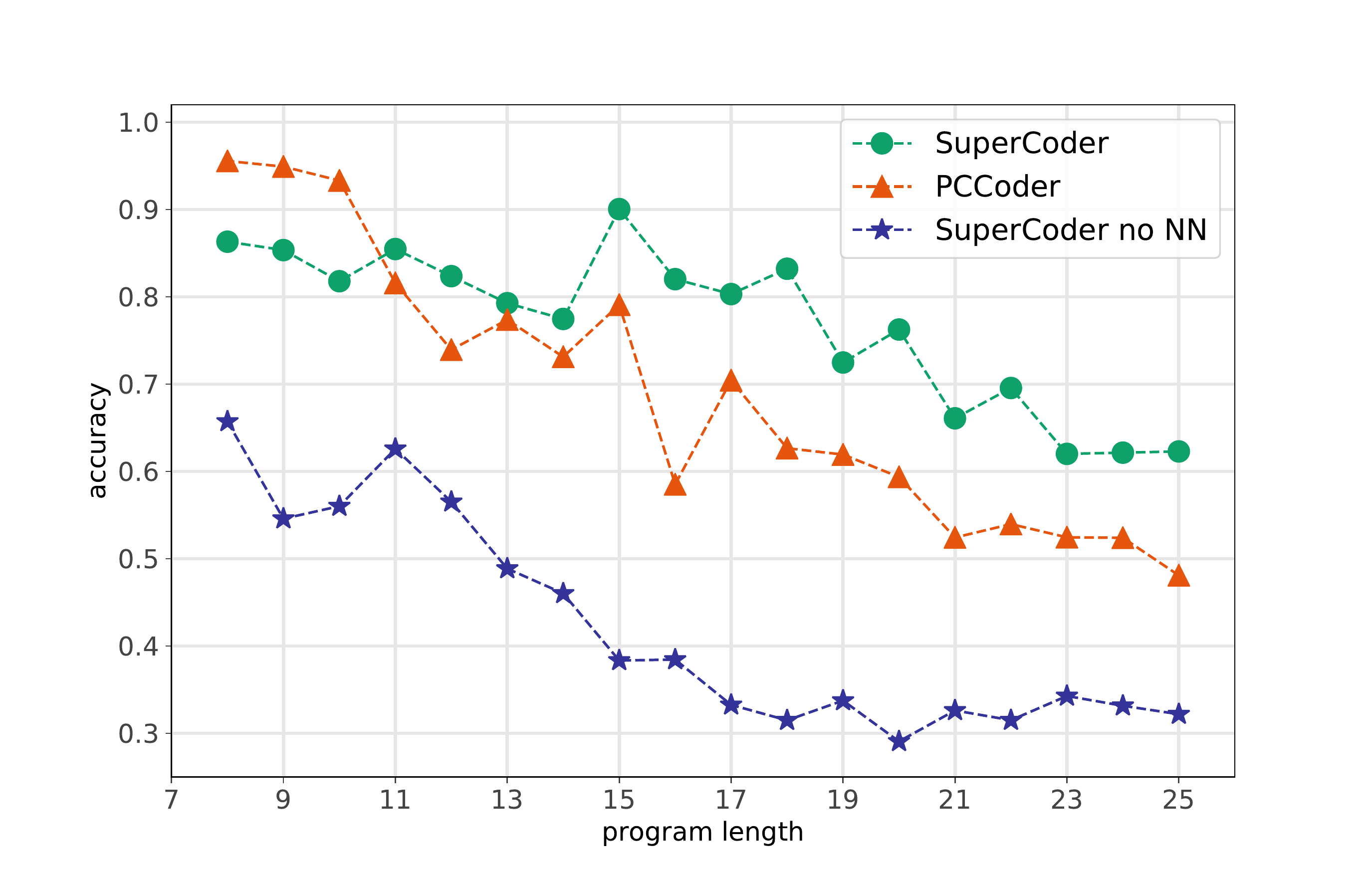}
\includegraphics[width=0.45\textwidth]{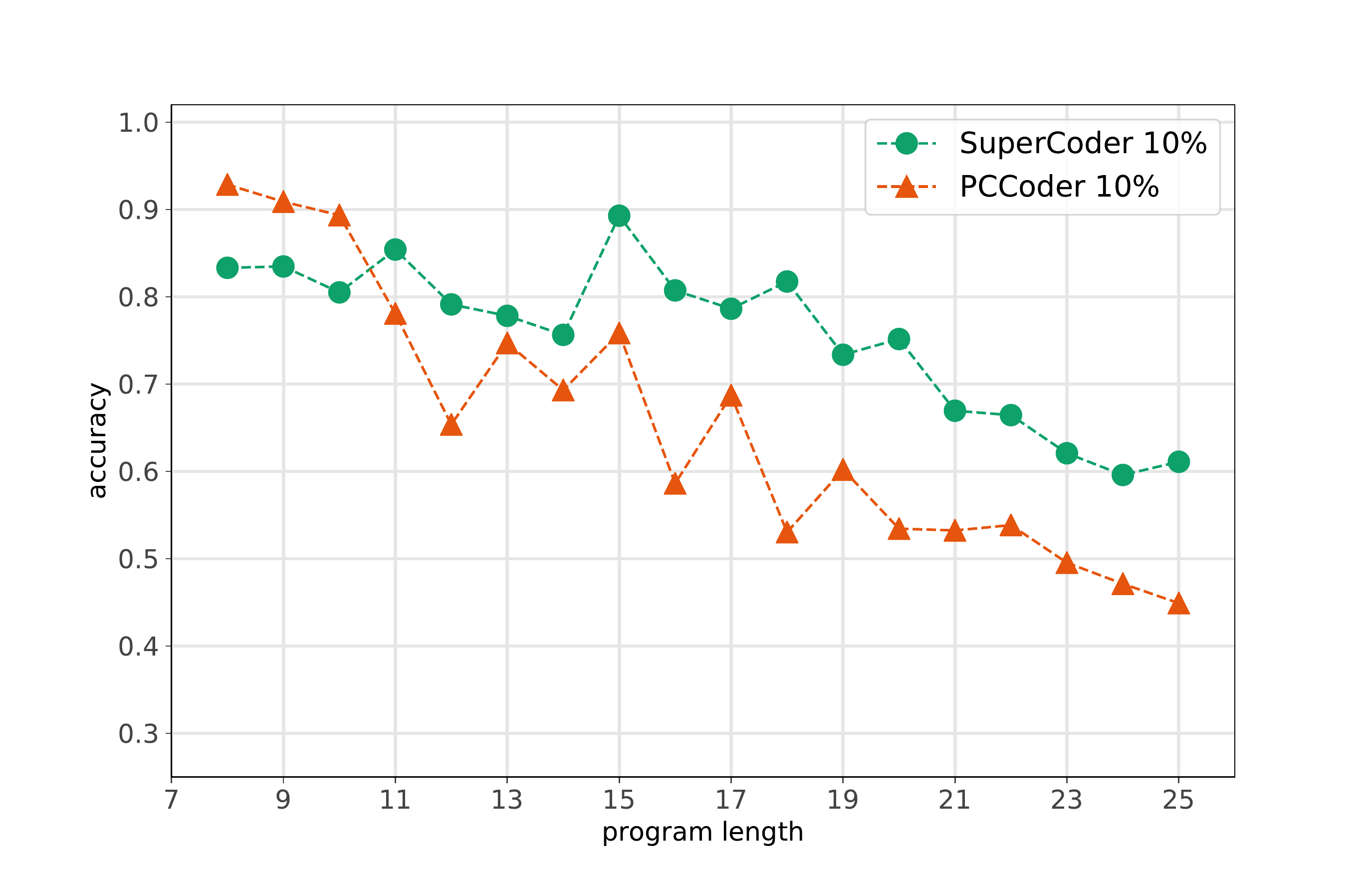}
\includegraphics[width=0.45\textwidth]{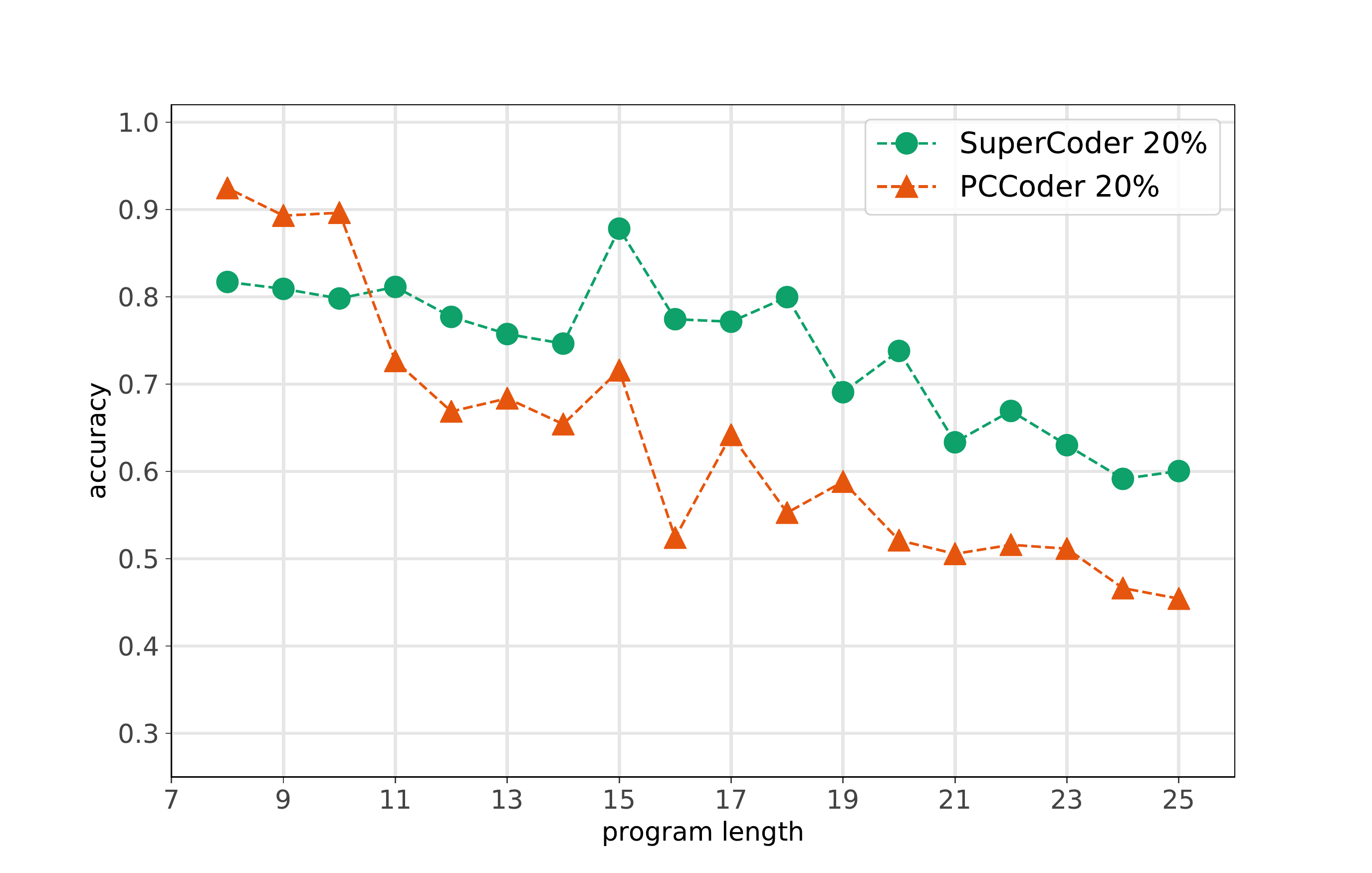}
\includegraphics[width=0.45\textwidth]{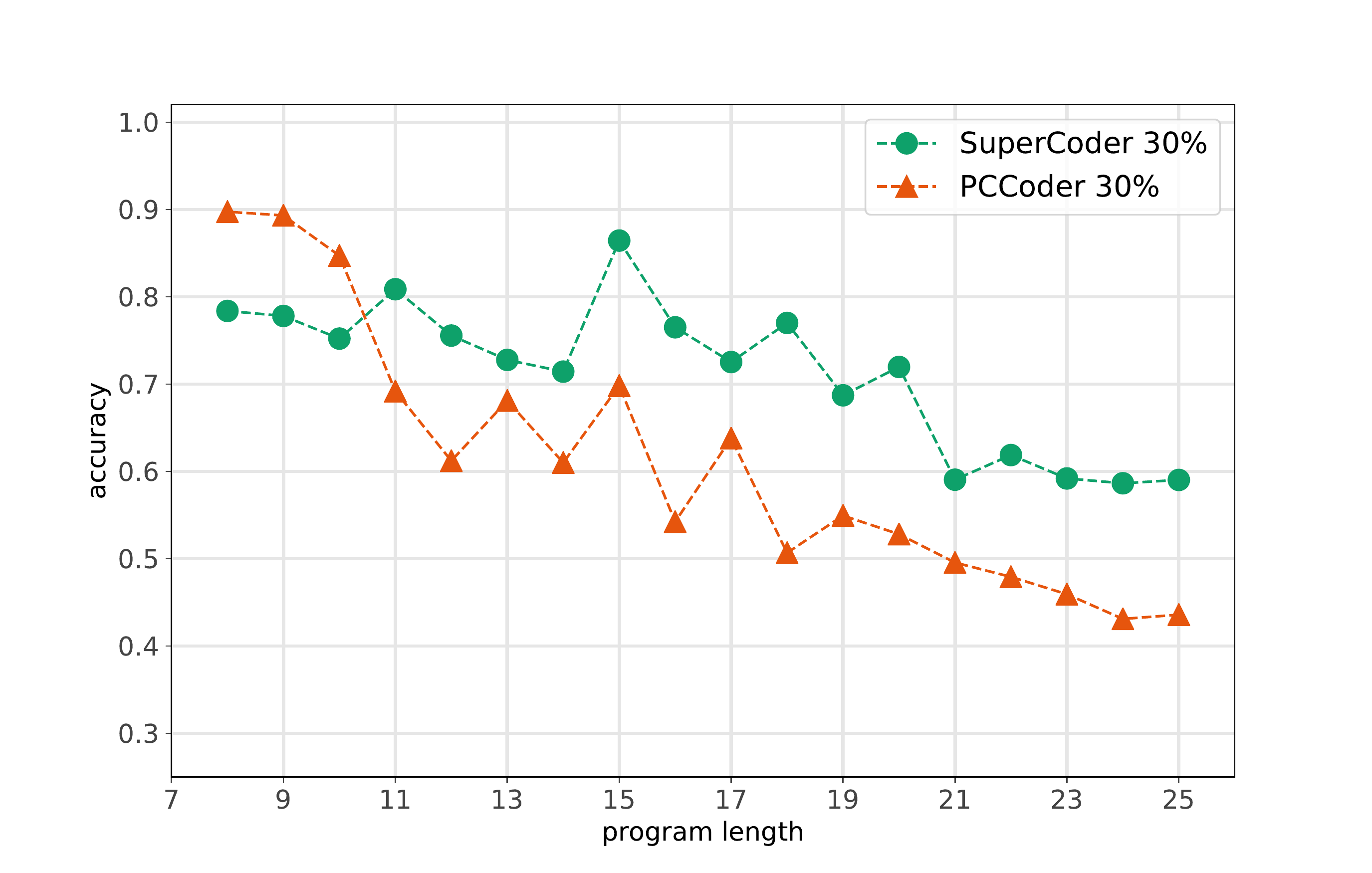}
\caption{Comparison of output prediction accuracies for SuperCoder and PCCoder, 
on a clean dataset (top left), as well as datasets with 10\%, 20\% and 30\% noise. SuperCoder outperforms PCCoder on long programs in all cases. On the top left plot we have included the prediction of SuperCoder with no network guidance, and random initialization.}
\label{induction_accuracy}
\end{figure}
The default PCCoder simply does not work on noisy datasets, however we can adapt it to noise with a slight modification. PCCoder uses a beam search to identify the first ``beam size'' most likely programs, and checks whether any of them is the correct program, mapping input to output. This is not meaningful when dealing with noisy data because a ``correct'' program does not necessarily exist. 

In the original PCCoder, the beam search is performed in a spiral manner, starting from a beam size of 100, and doubling the size if the program is not found, until a certain timeout is reached. Here, the notion of a consistent program does not apply, and instead we look for a program that gives the ``best'' output, i.e. one that shares the most tokens with the ground-truth output. For this purpose, instead of checking the program in the beam for consistency with input/output we calculate the score we defined earlier for all the programs in the beam, and choose the one with the highest score. To compare with SuperCoder, we allow the same timeout of 5 seconds for the search process. Given that there is no particular time at which the search finds a desired program and stops, and that a fixed timeout is specified, applying a spiral beam search is not the best option, rather, in order to take the most advantage of the whole allocated timeout, the optimum method would be to fix the beam size and set it to the highest value the algorithm can finish within the timeout. The beam size corresponding to the 5 seconds timeout depends on the program length. For each test dataset with a fixed program length, we scan over all beam sizes that are multiples of 10, identify the largest beam size that the algorithm finishes in 5 seconds, and choose that beam size plus 10 (in other words, we choose the smallest beam size that the algorithm cannot finish in 5 seconds). This search part is done on the observed examples in the sample, after which the chosen program will be applied to the inputs in assessment examples for the calculation of the token accuracy. We believe this is the optimum modification to PCCoder with which one can get the best performance on noisy data, and it indeed is highly efficient. 
The accuracies of SuperCoder and PCCoder for different amounts of noise and for different program lengths are shown in Fig.\ref{induction_accuracy}. To clarify the complementary roles of the network and the optimization process, we include, for the clean dataset, the result of the same experiment but with no network assistance, and random initialization instead. This shows that the network has a significant role for improving the optimization process of SuperCoder.

\section{Conclusions}

We have addressed the problem of automatic program learning in a domain specific language. We have developed a novel method, SuperCoder, inspired from quantum superposition in physics, where inferring a program or an output is done by optimizing a cost function through gradient descent. The performance of this process is improved by using a distinct neural network that provides initial points for the other optimization. SuperCoder improves the state-of-the-art for synthesising longer programs that use a specific DSL focused on operations on lists of integers. 
On clean datasets of I/O, corresponding to long programs, our SuperCoder approach can synthesize a higher number of programs compared to PCCoder. We also test the two approaches on noisy datasets. For this, we introduce a slight modification of PCCoder's search method which enables it to predict an output directly, hence turning it into a program learning method able to deal with noisy data. Experiments on test datasets of different program lengths and different amounts of noise show a significant increase in token accuracies on long programs with respect to the modified PCCoder. 

These improvements are the result of an interplay between several features of our method. First, contrary to typical search methods, where the number of variables increases at each step of the program, our probabilistic representation of variables captures all possibilities in a single three-dimensional fuzzy tensor whose structure remains the same through different steps of the program.  Moreover, there is no direct search involved and instead the likely program is inferred through an optimization process.

\section*{Acknowledgment}
This work was supported by the European Regional Development Fund and the Romanian Government
through the Competitiveness Operational Programme 2014-2020, project ID P\_37\_679, MySMIS code
103319, contract no. 157/16.12.2016.

\bibliography{supercoder_arxiv}
\bibliographystyle{abbrv}

\appendix

\section{Requirements of Theorem 1}

It is straightforward to show that DSL transformations of the form \eqref{transformations}, corresponding to the $ \sigma $ functions defined in \eqref{functions} satisfy requirements (a) and (b) of Theorem 1. Condition (b) holds because
\be 
\sum_k \tilde \psi_{10k} + \sum_{i\geq 2}\sum_k \tilde \psi_{ijk} = 
\sum_{i\geq 2}\sum_k \tilde \psi_{ijk} = 
\sum_{i\geq 2}\sum_{k_\sigma} \tilde \psi_{ij\sigma(k)} = 
\sum_{i\geq 2}\sum_{k_\sigma} \psi_{ijk} \leq 
\sum_{i\geq 2}\sum_k \psi_{ijk} \leq 1
\ee
where $ k_\sigma = \{k| 0 \leq \sigma(k) \leq d-1\} $, and in the first equation we have used the fact that $ \tilde \psi_{10k} =0 $ for such transformations. Also, the transformations \eqref{transformations} have been defined such that they map sharp states to sharp states. On the other hand, by eq. \eqref{transformations} if $ \tilde \psi_{ijk} $ is a non-null sharp state, $ \tilde \psi_{ijk} $ will differ from $ \tilde \psi_{ijk} $ only by the position of its nonzero elements along the third dimension, so it will be a non-null sharp state as well. Similar arguments show that these requirements are valid also for the \texttt{head} and \texttt{tail}  functions. The \texttt{head} function satisfies
\be 
\sum_k \tilde \psi_{10k} + \sum_{i\geq 2}\sum_k \tilde \psi_{ijk} = 
\sum_k \tilde \psi_{10k} = 
\sum_{i\geq 2}\sum_k \psi_{i0k}  \leq 1
\ee
which shows that it fulfils our condition (b). For the \texttt{tail} function, one only needs to replace $ \psi_{i0k} $ with $ \psi_{i,i-2,k} $. The \texttt{head} and \texttt{tail}  functions also obviously map sharp states to sharp states. However, a non-null sharp state $ \tilde \psi_{ijk} $  in this case does not in general imply a non-null sharp state $ \psi_{ijk} $. Nevertheless, a non-null sharp $ \psi_{ijk} $ is still guaranteed in our particular case where the DSL functions never mix lists of different length.

\section{Neural network details}

In Section \eqref{s:net} we  described briefly the neural network used to assist our optimization process. Here we provide some further details. The network architecture is shown in Fig. \ref{net}.
\begin{figure}[h]
	\centering
	\includegraphics[width=0.5\textwidth]{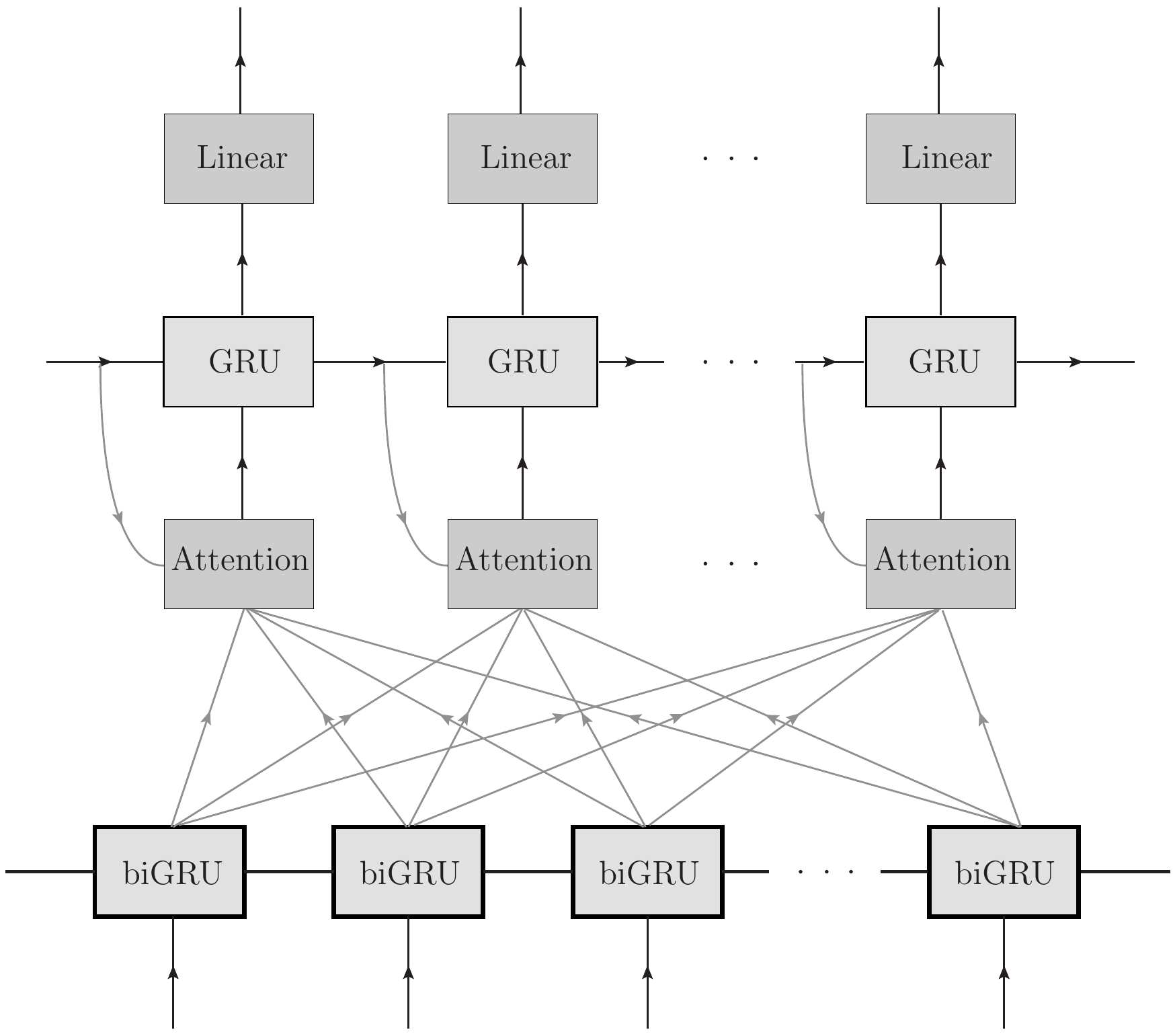} 
	\caption{Network architecture.}
	\label{net}
\end{figure}
The network and the training process do not rely on the representation discussed in Section \eqref{s:dsl}. Instead the variables are represented by a list of length $ L+2 $, where the first two entries take either $ 1, 0 $ or $ 0, 1 $, and specify whether the variable is an integer or a list. The remaining entries starting from the third one are filled with the index (between $ 0 $ and $ d-1 $) corresponding to the integers in the variable and the rest are padded with $ d $.

As explained in main text, each sample in the dataset includes a number of I/O pairs. The I/O in each pair are concatenated to give a list of length $ 2(L+2) $. For each sample these are then fed into the network as a batch of (in our case 5) examples. In both encoder/decoder RNN's an embedding is applied on the inputs before entering the GRU's, to reduce dimensionality of the indices.

To have an idea of the performance of this network we have defined three types of accuracies. In the first type which we call ``token accuracy'' all steps of the predicted sequence of operations are considered separately and any correct prediction of a single step in the sequence will count. The second type ``token accuracy top $ k $'' is similar to the first one except that instead of a single prediction we take the top $ k $ most likely outcomes and a prediction is counted as correct if the ground-truth label is within this set of $ k $ elements. Finally, in the third type of accuracy ``sequence accuracy top $ k $'' if all the steps of a ground-truth sequence are within the top $ k $ most likely values (predictions) of that step the sequence is said to be predicted correctly and counts in the accuracy. In this analysis we have taken $ k=5 $. In Fig. \ref{net_acc} we report all three accuracies after each epoch of training, for a total of 50 epochs. Training on each epoch takes roughly 105 seconds on a GPU. We can see that the ``sequence accuracy top $ k $'' starts with a low value but a large slope, so that it finally converges asymptotically to the ``token accuracy'' after 50 epochs and reaches around 52\%.
\begin{figure}[h]
	\centering
	\includegraphics[width=0.6\textwidth]{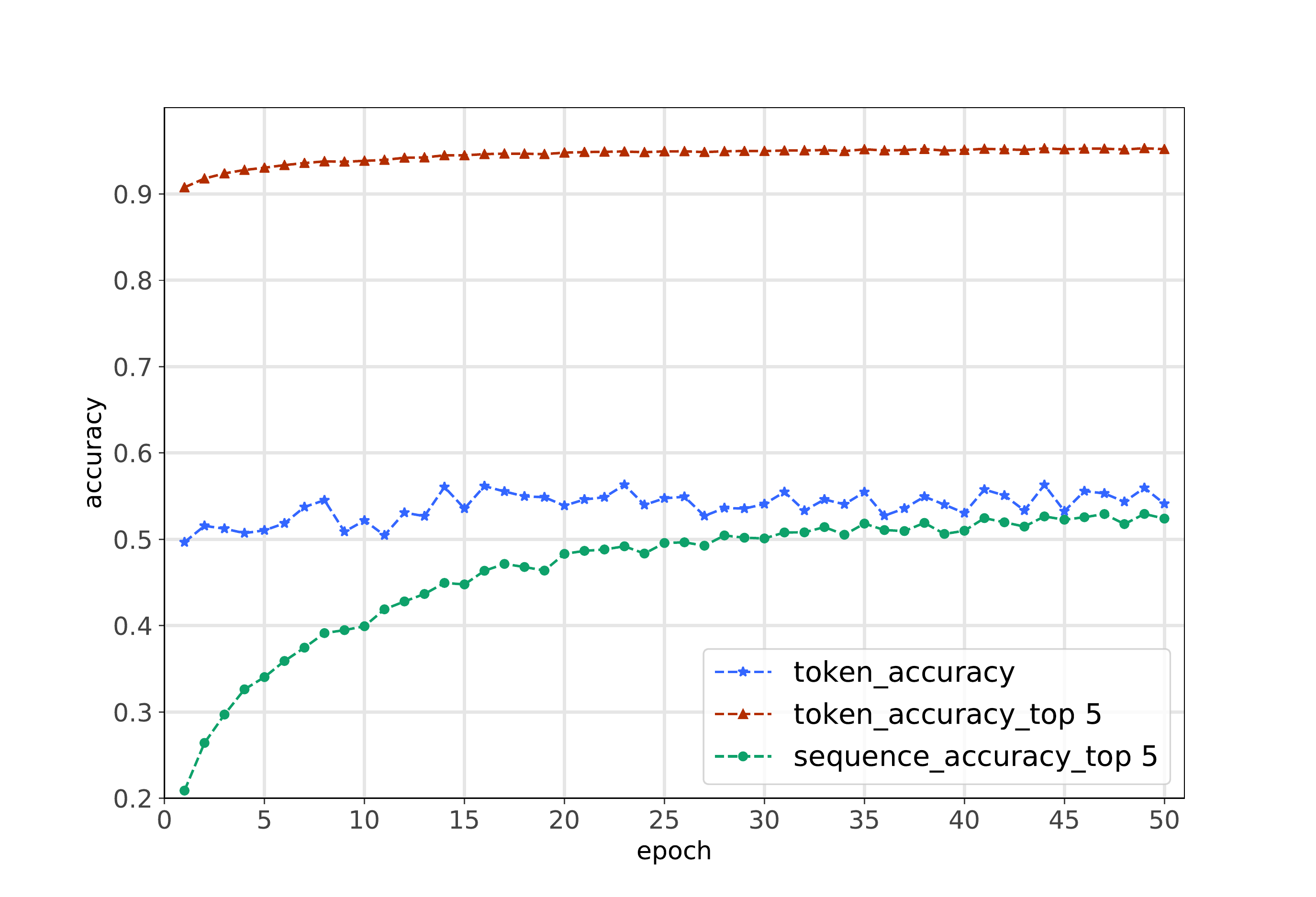}
	\caption{Token and sequence validation accuracies of the neural network predictions over training epochs. The percentages in the parenthesis are accuracies of the final epoch.}
	\label{net_acc}
\end{figure}

\end{document}